\def\eqref#1{equation~\ref{#1}}
\def\1{\bm{1}}
\DeclareMathAlphabet{\mathsfit}{\encodingdefault}{\sfdefault}{m}{sl}
\SetMathAlphabet{\mathsfit}{bold}{\encodingdefault}{\sfdefault}{bx}{n}
\definecolor{darkgray}{rgb}{0.50, 0.50, 0.50}
\definecolor{gray}{rgb}{0.70, 0.70, 0.70}
\definecolor{lightgray}{rgb}{0.92, 0.92, 0.92}
\newtheorem{theorem}{Theorem}
\newtheorem{corollary}[theorem]{Corollary}
\newtheorem{definition}{Definition}
\title{Lifelong Matrix Completion with Sparsity-Number}
\author{Ilqar Ramazanli \\
Carnegie Mellon University\\
Pittsburgh, USA \\
\texttt{iramazan@alumni.cmu.edu}  
}
\begin{document}

\maketitle

\begin{abstract}
Matrix completion problem has been previously studied under various adaptive and passive settings.
Previously, researchers have proposed passive, two-phase and single-phase algorithms using coherence parameter, and multi phase algorithm using sparsity-number.
It has been shown that the method using sparsity-number reaching to theoretical lower bounds in many conditions.
However, the aforementioned method is running in many phases through the matrix completion process, therefore it makes much more informative decision at each stage.
Hence, it is natural that the method outperforms previous algorithms.
In this paper, we are using the idea of sparsity-number and propose and single-phase column space recovery algorithm which can be extended to two-phase exact matrix completion algorithm. 
Moreover, we show that these methods are as efficient as multi-phase matrix recovery algorithm.
We provide experimental evidence to illustrate the performance of our algorithm.
\end{abstract}
%We extend our results to show how to adapt our algorithms to recover noisy matrices while the underlying matrix has non-degenerate random noise.

\vspace{8mm}
\section{Introduction}

Lifelong Adaptive Matrix Completion problem has been the center of the attention of research for many recent years. 
Previously matrix completion has been mainly studied in passive sampling setting by uniformly sampling and observing some subset of entries of the matrix in the beginning of the algorithm and applying nuclear norm minimization.
In a series of work in  \cite{gross}, \cite{recht1} and \cite{recht2} it has been shown that uniformly sampling and observing $\Omega ((m + n)r \mathrm{max} (\mu_0^2, \mu_1^2)\log^2{n_2})$ many entries are enough to recover an $m \times n$ matrix of rank $r$  with $\mu_0$ and $\mu_1$ column and row space coherence parameters using this idea.
Moreover, \cite{tao} showed this passive uniform sampling setting we need at least $\Omega(m r \mu_0 \log{n})$ observations to recover the matrix exactly. 
Therefore, this result was concluding almost optimality of the nuclear norm minimization where sampling method is passive.\\[2.9ex]
However, in modern data analysis it has been show that adaptive sensing and sampling methods are outperforming passive sampling methods \cite{ramazanli2022performance, haupt,kzmn, ramazanli2020adaptive}.
In a follow up work the paper \cite{akshay1} has proposed an adaptive matrix completion algorithm  that can actually recover underlying matrix using at most $\mathcal{O}(\mu_0 r^{1.5}\log{\frac{r}{\epsilon}})$ many observation.
Later authors further improved their existing results in \cite{akshay2} to show that the adaptive matrix completion algorithm can actually successfully perform by just observing 
$\mathcal{O}(\mu_0 r \log^2{\frac{r}{\epsilon}})$ many observation.\\[2.9ex]
The latest result has been even further optimized in the paper \cite{nina} to present an adaptive algorithm which performs only with observation complexity of just  $\mathcal{O}(\mu_0 r \log{\frac{r}{\epsilon}})$.
Recently \cite{poczos2020optimal} proposed a new parametrization method called sparsity-number and a multi-phase algorithm which shows that using this new method, the observation complexity can go as low as $\mathcal{O}((m+n-r)r)$.
The idea of this method to use the sparsest vector of column and row spaces instead of coherence in the recovery process.
Studying the sparsest vector of the column and row space has been focus in many research problems (\cite{sparse2}, \cite{sparse1}, \cite{sparse3}), and \cite{poczos2020optimal} showed how to use this parameter efficiently for adaptive matrix completion. 
In this paper, we propose a matrix completion method which merges ideas of single phase algorithms such as \cite{nina} with the idea of sparsity number and multi-phase method \cite{poczos2020optimal} which is resulting in a single phased algorithm which is as efficient as multi-phased one.\\[2.9ex]
Low-rank matrix completion plays a significant role in many real-world applications, including camera motion inferring, multi-class learning, positioning of sensors, and gene expression analysis \cite{ nina, akshay1}.
In gene expression analysis, the target matrix represents expression levels for various genes across several conditions.
Measuring gene expression, however, is expensive, and we would like to estimate the target matrix with a few observations as possible. In this paper, we provide an algorithm that can be used for matrix completion from limited data. 
Roughly speaking, to find each unknown expression level, we are supposed to do multiple measurements.
Each of the additional measurements has its extra cost. 
Naturally, we aim to solve the entire problem using the least possible measurement cost.

\vspace{6mm}

\section{Preliminaries}
\label{sec:preliminaries}

In this section we start by providing notations and definitions those are used throughout the paper. 
Then we will provide a single phase \cite{nina} and multi-phase \cite{opt2} matrix recovery algorithms.\\[2.9ex]
Throughout the paper, we denote by $\mathbf{M}$ the target underlying $m \times n$ sized rank-$r$  matrix that we want to recover.  
$\|x\|_{p}$ denote the $L_{p}$ norm of a vector $x \in \mathbb{R}^n$.
We call $x_i$ the $i$'th coordinate of $x$. 
For any, $\Omega \subset [n]$ let $x_{\Omega}$ denote the induced subvector of $x$ from coordinates $\Omega$.
For any $\mathbf{R}\subset[m]$, $\mathbf{M}_{\mathbf{R}:}$ stands for an $|\mathbf{R}| \times n$ sized submatrix of $\mathbf{M}$ that rows are restricted by $\mathbf{R}$. 
We define $\mathbf{M}_{:\mathbf{C}}$ in a similar way for restriction with respect to columns.
Intuitively, $\mathbf{M}_{\mathbf{R}:\mathbf{C}}$ defined for $|\mathbf{R}|\times |\mathbf{C}|$ sized submatrix of $\mathbf{M}$ with rows restricted to $\mathbf{R}$ and columns restriced to $\mathbf{C}$.
Moreover, for the special case $\mathbf{M}_{i:}$ stands for $i$-th row and $\mathbf{M}_{:j}$ stands for the $j$'th column.
Similarly, $\mathbf{M}_{i:\mathbf{C}}$ will represent the restriction of the row $i$ by $C$ and $\mathbf{M}_{\mathbf{R}:j}$ represents restriction of the column $j$ by $\mathbf{R}$.\\[2.9ex]
The following definitions are key parameters in this paper, first we define coherence which is an important parameter in the matrix completion research domain. 
Then, we visit the definition of the sparsity-number.\\[2.9ex]

\begin{definition}
Coherence parameter of a matrix $\mathbf{M}$ with column space $\mathbb{U}$ is defined as following where $\mathcal{P}_{\mathbb{U}}$ represents the orthogonal projection operator onto the subspace $\mathbb{U}$. 
\begin{align*}
\mu(\mathbb{U}) = \frac{n}{r} \underset{1 \leq j \leq n}{\max} || \mathcal{P}_{\mathbb{U}} e_j ||^2,
\end{align*}

\end{definition}
One can observe that if $e_j \in \mathbb{U} $ for some $j\in [n]$, then the coherence will attain its maximum value: $\mu(\mathbb{U}) = \frac{n}{r}$. \\
\begin{definition}
\textit{Nonsparsity-number} of a vector $x\in \mathbb{R}^m$ represented by $\psi(x)$ and defined as $\psi(x) = \|x \|_0$.
Moreover, \textit{nonsparsity-number} of an $m\times n$ sized matrix $\mathbf{M}$ and a subspace $\mathbb{U}\subseteq \mathbb{R}^m$ also represented by $\psi$ and defined as:
\begin{align*}
    \psi(\mathbf{M})=\mathrm{min}\{\psi(x) | x=\mathbf{M}z \text{ and } z\notin \mathrm{null}(\mathbf{M})  \} \hspace{15mm}  \psi(\mathbb{U})=\mathrm{min}\{\psi(x) | x\in \mathbb{U} \text{ and } x\neq 0 \}
\end{align*}

Sparsity-number is just completion of nonsparsity-number and is donated by $\overline{\psi}$:\\
\begin{align*}
\overline{\psi}(x) = m-\psi(x) \hspace{20mm}
\overline{\psi}(\mathbf{M})=  m- \psi(\mathbf{M}) \hspace{20mm}
\overline{\psi}(\mathbb{U})=  m- \psi(\mathbb{U}) 
\end{align*}  
\end{definition}

\vspace{6mm}
\subsection{Single Phase Matrix Completion} 
Here we provide the generic framework for single-phase matrix completion algorithm that has been used by \cite{akshay1, akshay2, nina}.
The algorithm uniformly sample $d$-many entry in each column and observe them. 
Based on the observation, if it has been decided the column is linearly independent with previous columns then algorithm decides to observe entire column. 
Initially, it has been shown that choosing $d = \mathcal{O}(n\mu_0 r^{1.5}\log{r})$ would be enough this algorithm to perform successfully in \cite{akshay1}.
Then, authors showed setting $d = \mathcal{O}(n\mu_0 r\log^2{r})$  would also be enough \cite{akshay2}.
In a following work \cite{nina} optimized the complexity by reducing one of the $\log{r}$ factors from $d$ and proving $d = \mathcal{O}(n\mu_0 r \log{r})$ still works successfully with probability $1-\epsilon$.

\begin{algorithm}
\caption*{  \hypertarget{hn2016}{\textbf{HN2016:}} Exact recovery  \cite{nina}.}
\textbf{Input:}   $d=\mathcal{O}(\mu_0 r \log{\frac{r}{\epsilon}})$\\
 \textbf{Initialize:}  $k=0 , \widehat{\mathbf{U}}^0 = \emptyset$ 
\begin{algorithmic}[1]
    \STATE Draw uniformly random entries $\Omega \subset [m]$ of size $d$   
    \FOR{$i$ from $1$ to $n$}        
    \STATE  \hspace{0.2in} \textbf{if} $\| \mathbf{M}_{\Omega:i}-{\mathcal{P}_\mathbf{\widehat{U}_{\Omega}^k}} \mathbf{M}_{\Omega:i}\| >0$ 
    \STATE \hspace{0.2in}  \hspace{0.2in} Fully observe $\mathbf{M}_{:i}$ 
    \STATE  \hspace{0.2in}  \hspace{0.2in}  $\widehat{\mathbf{U}}^{k+1} \leftarrow \widehat{\mathbf{U}}^{k} \cup \mathbf{M_{:i}} $, Orthogonalize $\widehat{\mathbf{U}}^{k+1}$,  $k=k+1$          
   \STATE   \hspace{0.2in}  \textbf{otherwise:} $\widehat{\mathbf{M}}_{:i} = \widehat{\mathbf{U}}^k {\widehat{\mathbf{U}}^{k^+}_{\Omega}}
 \widehat{\mathbf{M}}_{\Omega :i}$
\ENDFOR
\end{algorithmic}
%\algorithmicindent \textbf{Output:} return $\widehat{\mathbf{M}}$
\textbf{Output:}  $\widehat{\mathbf{M}}$
\end{algorithm}

\vspace{6mm}
\subsection{Multi Phase Matrix Completion} 
\vspace{3mm}

Here we provide multi-phase matrix completion algorithm from \cite{poczos2020optimal}.
The are two main differences between this algorithm and previous ones. 
First, previous algorithms goes through the columns of the matrix just once, however, here we go through each column many times.
Second, there is a difference in terms of available pre-information.
One can see that, in order to perform previous algorithms, we need to have at least estimated information of rank $r$ and column space coherence $\mu_0$. 
However, in order to perform $\mathbf{ERRE}$ all we need to know is either column or row space is not highly coherent.

\vspace{6mm}

\begin{algorithm}
\caption*{  \hypertarget{erre}{\textbf{ERRE:}}  Exact recovery while rank estimation. \cite{poczos2020optimal} }

 \textbf{Input:}  $T$-delay parameter at the end of algorithm \\
 \textbf{Initialize:} $R = \emptyset, C=\emptyset,  \widehat{r}= 0,  delay = 0$

\begin{algorithmic}[1]\label{alg1}

    \WHILE{$delay <T $}
    \STATE   $ delay= delay + 1$

    \FOR{$j$ from $1$ to $n$}
%    \WHILE{ \widehat{r} < r }{t from 1 to n,}
    \STATE Uniformly pick an unobserved entry $i$ from $\mathbf{M}_{:j}$  % $\Omega \subset [n_2]$ of size $d$   
    \STATE $\widehat{R}={R} \cup \{i\} , \widehat{C}= {C} \cup \{j\}$  
    
    \STATE \textbf{If} $ \mathbf{M}_{\widehat{R}:\widehat{C}}$ is nonsingular :
    \STATE \hspace{0.1in} Fully observe $\mathbf{M}_{:j}$ and $\mathbf{M}_{i:} $ 
%    \STATE \hspace{0.2in} $k=k+1$ 
and set  $R =\widehat{R}$ , $C = \widehat{C}$ , $\widehat{r}=\widehat{r}+1, delay=0$
%    \STATE \textbf{else} :
    \ENDFOR

\ENDWHILE

\STATE Orthogonalize column vectors in $C$ and assign to $\mathbf{U}$

\FOR{each column $j \in [n]\setminus C$ }
\STATE $\widehat{\mathbf{M}}_{:j} = \widehat{\mathbf{U}} {\widehat{\mathbf{U}}_{R:}}^+ \widehat{\mathbf{M}}_{R: j}$
\ENDFOR

\end{algorithmic}
\textbf{Output:} $\widehat{\mathbf{M}}, \widehat{r}$
\end{algorithm}

\vspace{3mm}

The authors proved that the observation complexity could be bounded by the following expression which is more optimal than previous methods and many times it matches $\mathcal{O}((m+n-r)r)$ which is theoretical lower bound for any kind of rank-$r$ matrix recovery algorithm.\\

\begin{theorem}  \label{thm:erre}
Let $r$ be the rank of underlying $m\times n$ sized matrix $\mathbf{M}$ with column space $\mathbb{U}$ and row space $\mathbb{V}$. 
Then, $\mathbf{ERRE}$ exactly recovers the underlying matrix $\mathbf{M}$ while estimating rank with probability at least $1- ( \epsilon + e^{-T\frac{\psi(\mathbb{U})\psi(\mathbb{V})}{m}})$  using number of observations at most:
\vspace{3mm}
\begin{align*}
  (m+n-r)r+Tn +  \mathrm{min} \Big(  2 \frac{m n}{\psi(\mathbb{U})}\log{(\frac{r}{\epsilon})} , \frac{\frac{2m}{\psi(\mathbb{U})}(r+2 +\log{\frac{1}{\epsilon}})}{\psi(\mathbb{V})}n ) \Big)
\end{align*}
\end{theorem}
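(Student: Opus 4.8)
The plan is to split the argument into a \emph{deterministic} correctness claim and a \emph{probabilistic} analysis of how long the while-loop runs, since once the loop has collected $r$ independent columns the reconstruction is exact with no randomness left. First I would prove the deterministic part: suppose the loop terminates with $\widehat{r}=r$, so $C$ indexes $r$ columns with $\mathbf{M}_{R:C}$ nonsingular and $\mathbf{U}$ an orthonormal basis of $\mathbb{U}$. For any column $j$ write $\mathbf{M}_{:j}=\mathbf{U}\beta_j$; since $|R|=r$ and $\mathbf{U}_{R:}$ is invertible (it is the row-restriction of a basis whose $R\times C$ minor is nonsingular), the reconstruction step gives $\mathbf{U}\mathbf{U}_{R:}^{+}\mathbf{M}_{R:j}=\mathbf{U}\mathbf{U}_{R:}^{-1}\mathbf{U}_{R:}\beta_j=\mathbf{M}_{:j}$. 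Hence exact recovery holds \emph{whenever} the loop reaches rank $r$, and the only way to fail is to stop early; the rest of the proof is about bounding that event and counting observations.

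Next I would control early stopping, which produces the $e^{-T\psi(\mathbb{U})\psi(\mathbb{V})/m}$ term, via two geometric facts. (i) If $\widehat{r}<r$ and column $j$ is independent of $\mathrm{span}(\mathbf{M}_{:C})$, the residual $\rho=\mathbf{M}_{:j}-\mathbf{M}_{:C}\mathbf{M}_{R:C}^{-1}\mathbf{M}_{R:j}$ is a nonzero vector of $\mathbb{U}$ that vanishes on $R$, so by the definition of the nonsparsity-number it has at least $\psi(\mathbb{U})$ nonzero coordinates, all lying outside the already-observed rows; by the Schur-complement identity $\det\mathbf{M}_{\widehat{R}:\widehat{C}}=\det\mathbf{M}_{R:C}\cdot\rho_i$, a uniformly drawn unobserved entry $i$ makes $\mathbf{M}_{\widehat{R}:\widehat{C}}$ nonsingular with probability at least $\psi(\mathbb{U})/m$. (ii) When $\widehat{r}<r$ there are at least $\psi(\mathbb{V})$ independent columns: writing $\mathbf{M}=\mathbf{U}\mathbf{H}$ with the rows of $\mathbf{H}$ spanning $\mathbb{V}$, choose a nonzero $w$ with $w^{\top}\mathbf{H}_{:C}=0$; then $w^{\top}\mathbf{H}$ is a nonzero element of $\mathbb{V}$, so it has at least $\psi(\mathbb{V})$ nonzero entries, and each such index $j$ satisfies $\mathbf{H}_{:j}\notin\mathrm{span}(\mathbf{H}_{:C})$. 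Conditioning on a pass that detects nothing (so the basis is frozen throughout the pass), these $\ge\psi(\mathbb{V})$ columns are tested by independent draws, so a pass fails to raise the rank with probability at most $(1-\psi(\mathbb{U})/m)^{\psi(\mathbb{V})}\le e^{-\psi(\mathbb{U})\psi(\mathbb{V})/m}$, and $T$ consecutive failing passes while $\widehat{r}<r$ has probability at most $e^{-T\psi(\mathbb{U})\psi(\mathbb{V})/m}$.

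For the observation count I would add three contributions. The $r$ fully observed columns and $r$ fully observed rows overlap in an $r\times r$ block, contributing $rm+rn-r^{2}=(m+n-r)r$. The final $T$ certifying passes sample one entry in each of the $n$ columns, contributing $Tn$. The remaining productive sampling I would bound two ways and take the minimum, each at failure probability $\epsilon$: (a) a union bound over the $r$ independent directions, each detected within $\tfrac{m}{\psi(\mathbb{U})}\log\tfrac{r}{\epsilon}$ passes by the geometric tail in (i), giving at most $2\tfrac{mn}{\psi(\mathbb{U})}\log\tfrac{r}{\epsilon}$ samples; and (b) a negative-binomial concentration estimate using that each productive pass yields at least $\psi(\mathbb{U})\psi(\mathbb{V})/m$ detections in expectation, so that $\tfrac{m}{\psi(\mathbb{U})\psi(\mathbb{V})}\!\left(r+2+\log\tfrac{1}{\epsilon}\right)$ passes collect all $r$ columns whp, which after multiplying by the $n$ samples per pass gives the second min-term. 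Summing the three contributions and adding the two failure probabilities yields the stated bound.

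I expect the main obstacle to be the tight, simultaneous calibration of the probabilistic estimates: the single per-draw guarantee $\psi(\mathbb{U})/m$ and the per-pass count $\psi(\mathbb{V})$ must be pushed through to produce \emph{both} min-terms with their exact constants — especially the $r+2+\log\tfrac1\epsilon$ factor in bound (b), which needs a careful tail bound on the number of passes required to accumulate $r$ successes — while keeping the independence bookkeeping honest, in particular the conditioning that freezes the basis during a detection-free pass so that the $\psi(\mathbb{V})$ tests are genuinely independent.
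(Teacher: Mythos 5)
A preliminary remark on the comparison itself: this paper never proves Theorem~\ref{thm:erre}. It is imported verbatim from \cite{poczos2020optimal}, and the only visible traces of its proof are the pointers made inside the paper's proof of Theorem~\ref{thm:erei} (``step 5 of the proof of the theorem $\mathbf{ERRE}$,'' ``lemma 13,'' ``lemma 18'' of that reference). Measured against those traces, your reconstruction follows essentially the intended route: your item (i) (residual in $\mathbb{U}$ vanishing on $R$, the Schur-complement identity $\det \mathbf{M}_{\widehat{R}:\widehat{C}} = \det \mathbf{M}_{R:C}\cdot \rho_i$, per-draw success probability $\psi(\mathbb{U})/m$) is the ``step 5'' ingredient; your item (ii) (a vector $w$ annihilating $\mathbf{H}_{:C}$ yields a nonzero row-space vector, hence at least $\psi(\mathbb{V})$ currently independent columns) is ``lemma 18''; your negative-binomial estimate producing the $r+2+\log\frac{1}{\epsilon}$ count is ``lemma 13''; and your accounting $(m+n-r)r + Tn + \min(\cdot,\cdot)$, with $e^{-T\psi(\mathbb{U})\psi(\mathbb{V})/m}$ coming from $T$ consecutive detection-free passes, is the intended decomposition. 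Your deterministic step (exact reconstruction once $\widehat{r}=r$, because $\mathbf{U}_{R:}$ inherits invertibility from $\mathbf{M}_{R:C}$) is also correct.

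There is, however, one genuine soft spot, and it is precisely where you wrote ``all lying outside the already-observed rows.'' The observed entries of a still-untested column $j$ are not only the rows in $R$: they also include the entries of column $j$ drawn in \emph{earlier passes} whose tests failed. Those entries were zeros of the residual computed at the time they were drawn, but the residual changes each time $C$ grows, and an entry that was a zero of an old residual can be a nonzero coordinate of the current residual $\rho$ (one can build rank-$3$ examples in $\mathbb{R}^5$ where exactly this happens). Consequently the number of \emph{good unobserved} entries is only guaranteed to be at least $\psi(\mathbb{U}) - s_j$, where $s_j$ counts prior draws in column $j$, and a uniform draw over unobserved entries need not succeed with probability $\psi(\mathbb{U})/m$; for $s_j\ge 1$ and $m$ large the ratio can dip below it. To make the argument airtight you must either discount the prior draws explicitly (note $s_j$ is bounded by the number of passes, and try to absorb the loss into the factor-$2$ slack in $\frac{2m}{\psi(\mathbb{U})}$ --- this is not automatic, since the number of passes need not be small compared to $\psi(\mathbb{U})$), or rerun the tail bounds with the weakened per-draw probability $(\psi(\mathbb{U})-s_j)/m$ and a stopping-time argument. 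A second, smaller issue: ``conditioning on a pass that detects nothing (so the basis is frozen)'' is circular as phrased, since you are conditioning on the very outcomes you want to bound; the clean version bounds $P(\text{all tests in the pass fail})$ as a product of conditional probabilities, each term conditioned on the earlier tests in the pass having failed, which is what actually keeps the basis fixed and lets the $\psi(\mathbb{V})$ active columns each contribute a factor of at most $1-\psi(\mathbb{U})/m$.
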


\vspace{3mm}

\section{Main Results}

In this section, we provide single-phase column space recovery algorithm using \textit{sparsity-number} which can be easily extended to two-phase matrix recovery algorithm.
The algorithm works in a similar flavor to \cite{nina} that it requires a preinformation regarding properties of the matrix in order to execute.\\[2.5ex]
However, there is one clear difference in terms of preinformation that, here we require estimated information for sparsity number rather than coherence, and also we use the information regarding row-space as well rather than simply columns space.
The reason behind this is, in deep mathemtical analysis of aforementioned methods shows us all these methods work efficiently under the condition row spaces are highly coherent and in-depth analysis of these methods tells us that these algorithms designed for the highest value of row space coherence, and they also work perfectly well for remaining cases.\\[2.5ex]
In the following algorithm, we extend these algorithms to a method that can enjoy the properties of row space; meanwhile, performing similarly good for highly coherent row space matrices. 
Note taht, the algorithm below designed and analyzed under the condition that estimated r, $\psi(\mathbb{U})$, and $\psi(\mathbb{V})$ provided.  
However, under the condition just estimated $r$ and $\mu_0$ provided, we can set $\psi(\mathbb{V})$ to be equal to 1 and use the inequality $\mu_0 r > \frac{m}{k}$  to transfer the information of $\mu_0$ to $\psi(\mathbb{U})$. 
Then the algorithm below will perform as good as \cite{nina} with observation complexity of $mr + \mathcal{O}(nr\mu_0 \log{\frac{r}{\epsilon}})$.

\vspace{5mm}

\begin{algorithm}
\caption*{  \hypertarget{erei}{\textbf{EREI:}}  Exact recovery with estimated information}
\textbf{Input:}   $r, \psi(\mathbb{U}), \psi(\mathbb{V})$\\
 \textbf{Initialize:}  $R=\emptyset, k=0 , \widehat{\mathbf{U}}^0 = \emptyset, d =  \mathrm{min}\Big(  2 \frac{m }{\psi(\mathbb{U})}\log{(\frac{r}{\epsilon})} , \frac{\frac{2m}{\psi(\mathbb{U}}(r+2 +\log{\frac{1}{\epsilon}})}{\psi(\mathbb{V})} \Big)$  
 
\begin{algorithmic}[1]
    \STATE Draw uniformly random entries $\Omega \subset [m]$ of size $d$   
    \FOR{$i$ from $1$ to $n$}        
    \STATE  \hspace{0.2in} \textbf{if} $\| \mathbf{M}_{\Omega:i}-{\mathcal{P}_\mathbf{\widehat{U}_{\Omega}^k}} \mathbf{M}_{\Omega:i}\| >0$ 
    \STATE \hspace{0.2in}  \hspace{0.2in} Fully observe $\mathbf{M}_{:i}$ 
    \STATE  \hspace{0.2in}  \hspace{0.2in}  $\widehat{\mathbf{U}}^{k+1} \leftarrow \widehat{\mathbf{U}}^{k} \cup \mathbf{M_{:i}} $, Orthogonalize  $\widehat{\mathbf{U}}^{k+1}$, $k=k+1$          
    \STATE  \hspace{0.2in}  \hspace{0.2in}  Select a row $a \in \Omega \setminus R$ that, $\mathbf{\widehat{U}_{  R\cup \{a\}}^{k+1}}$ is rank $k+1$ then
    $R \leftarrow R \cup \{a\}$
    \STATE \hspace{0.2in} Draw uniformly random entries $\Delta \subset [m]\setminus R$ of size $d$ and $\Omega = \Delta \cup R$   
\ENDFOR

    \STATE Observe entire $\mathbf{M}_{R:}$ 
    \FOR{$i$ from $1$ to $n$}        
       \STATE   \hspace{0.2in}  \textbf{if $\mathbf{M}_{:i}$} \text{ not fully observed \textbf{then} : } $\widehat{\mathbf{M}}_{:i} = \widehat{\mathbf{U}}^k {\widehat{\mathbf{U}}^{k^+}_{R:}}
 \widehat{\mathbf{M}}_{R :i}$
 \ENDFOR
\end{algorithmic}
%\algorithmicindent \textbf{Output:} return $\widehat{\mathbf{M}}$
\textbf{Output:}  $\widehat{\mathbf{M}}$x
\end{algorithm}

\vspace{5mm}

Below we show the execution of the algorithm for $d=2$ and $r=1$. \colorbox{gray}{$a$} stands for entries that is observed randomly,
\colorbox{darkgray}{$a$} stands for entries that is observed deterministically and 
\colorbox{lightgray}{$a$} stands for recovered entries without observing. 
Note that $\Omega_1 =\{1,5\}, \Omega_2 =\{2,5\}, \Omega_3 =\{1,3\}$ and after 3-rd iteration $R$ becomes $\{ 3 \}$ therefore, in fourth column $\mathbf{M}_{3:4}$ observed deterministically, besides together random observations $\Omega_4 = \{5,6\}$.
After all of the iterations completed, we observe the entire $\mathbf{M}_{3:}$, and in the next step, we recover remaining entries.

\vspace{5mm}

\begin{equation*}\label{eq:appendrow}
  \arraycolsep=0.3pt
\newcommand\x{\colorbox{white}}
\newcommand\y{\colorbox{gray}}
\newcommand\z{\colorbox{lightgray}}
\newcommand\w{\colorbox{darkgray}}
  \left[ 
  \begin{array}{cccc}
    \y{0}    & \x{0}   & \x{0}   & \x{0}  \\
    \x{0}    & \x{0}   & \x{0}   & \x{0} \\
    \x{1}    & \x{3}   & \x{2}   & \x{3} \\
    \x{0}    & \x{0}   & \x{0}   & \x{0}  \\
    \y{0}    & \x{0}   & \x{0}   & \x{0} \\
    \x{2}    & \x{6}   & \x{4}   & \x{6} \\
  \end{array} 
  \right]
   \left[ \begin{array}{cccc}
    \y{0}    & \x{0}   & \x{0}   & \x{0}  \\
    \x{0}    & \y{0}   & \x{0}   & \x{0} \\
    \x{1}    & \x{3}   & \x{2}   & \x{3} \\
    \x{0}    & \x{0}   & \x{0}   & \x{0}  \\
    \y{0}    & \y{0}   & \x{0}   & \x{0} \\
    \x{2}    & \x{6}   & \x{4}   & \x{6} \\
  \end{array} 
  \right]
  \left[\begin{array}{cccc}
    \y{0}    & \x{0}   & \y{0}   & \x{0}  \\
    \x{0}    & \y{0}   & \x{0}   & \x{0} \\
    \x{1}    & \x{3}   & \y{2}   & \x{3} \\
    \x{0}    & \x{0}   & \x{0}   & \x{0}  \\
    \y{0}    & \y{0}   & \x{0}   & \x{0} \\
    \x{2}    & \x{6}   & \x{4}   & \x{6} \\
  \end{array}\right]
  \left[\begin{array}{cccc}
    \y{0}    & \x{0}   & \w{0}   & \x{0}  \\
    \x{0}    & \y{0}   & \w{0}   & \x{0} \\
    \x{1}    & \x{3}   & \w{2}   & \w{3} \\
    \x{0}    & \x{0}   & \w{0}   & \x{0}  \\
    \y{0}    & \y{0}   & \w{0}   & \y{0} \\
    \x{2}    & \x{6}   & \w{4}   & \y{6} \\
  \end{array}\right]
  \left[\begin{array}{cccc}
    \y{0}    & \x{0}   & \w{0}   & \x{0}  \\
    \x{0}    & \y{0}   & \w{0}   & \x{0} \\
    \w{1}    & \w{3}   & \w{2}   & \w{3} \\
    \x{0}    & \x{0}   & \w{0}   & \x{0}  \\
    \y{0}    & \y{0}   & \w{0}   & \y{0} \\
    \x{2}    & \x{6}   & \w{4}   & \y{6} \\
  \end{array}\right]
  \left[\begin{array}{cccc}
    \y{0}    & \z{0}   & \w{0}   & \z{0}  \\
    \z{0}    & \y{0}   & \w{0}   & \z{0} \\
    \w{1}    & \w{3}   & \w{2}   & \w{3} \\
    \z{0}    & \z{0}   & \w{0}   & \z{0}  \\
    \y{0}    & \y{0}   & \w{0}   & \y{0} \\
    \z{2}    & \z{6}   & \w{4}   & \y{6} \\
  \end{array}\right]
\end{equation*}\\

\vspace{5mm}

\begin{theorem} \label{thm:erei}
Let $r$ be the rank of underlying $m\times n$ sized matrix $\mathbf{M}$ with column space $\mathbb{U}$ and row space $\mathbb{V}$. 
Then, with probability $1-\epsilon$
the algorithm $\mathbf{EREI}$ exactly recovers the underlying matrix $\mathbf{M}$ using number of observations at most
\begin{align*}
(m+n-r)r +  \mathrm{min}\Big(  2 \frac{m n}{\psi(\mathbb{U})}\log{(\frac{r}{\epsilon})} , \frac{\frac{2m}{\psi(\mathbb{U})}(r+2 +\log{\frac{1}{\epsilon}})}{\psi(\mathbb{V})}n \Big)  
\end{align*}
\end{theorem}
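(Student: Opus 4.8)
The plan is to split the argument into a deterministic correctness reduction and a probabilistic detection guarantee, and then to count observations. First I would observe that the residual test can never produce a \emph{false positive}: if $\mathbf{M}_{:i}\in\mathrm{span}(\widehat{\mathbf{U}}^k)$ then $\mathbf{M}_{\Omega:i}=\widehat{\mathbf{U}}^k_{\Omega:}c$ exactly, so the residual $\|\mathbf{M}_{\Omega:i}-\mathcal{P}_{\widehat{\mathbf{U}}^k_\Omega}\mathbf{M}_{\Omega:i}\|$ is identically zero. Hence the only way the algorithm can err is a \emph{false negative}, namely failing to fully observe a genuinely independent column. Conditioned on no false negative over the whole sweep, the algorithm collects exactly $r$ independent columns, so $\widehat{\mathbf{U}}=\widehat{\mathbf{U}}^r$ spans the true column space $\mathbb{U}$ and every remaining column obeys $\mathbf{M}_{:i}=\widehat{\mathbf{U}}c_i$. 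Restricting to the pivot rows gives $\mathbf{M}_{R:i}=\widehat{\mathbf{U}}_{R:}c_i$, so if $\widehat{\mathbf{U}}_{R:}$ is an invertible $r\times r$ block then $c_i=(\widehat{\mathbf{U}}_{R:})^+\mathbf{M}_{R:i}$ and the reconstruction $\widehat{\mathbf{M}}_{:i}=\widehat{\mathbf{U}}(\widehat{\mathbf{U}}_{R:})^+\widehat{\mathbf{M}}_{R:i}$ is exact.

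Next I would show that invertibility of the pivot block is automatic given detection, so that no extra probability is spent on it. I would carry the invariant that after $k$ discoveries $|R|=k$, $\widehat{\mathbf{U}}^k_{R:}$ has rank $k$, and $R\subseteq\Omega$ (enforced by $\Omega=\Delta\cup R$). When a new column raises the rank, the detection condition says precisely that $\widehat{\mathbf{U}}^{k+1}_{\Omega:}$ has rank $k+1$; since re-orthogonalization preserves span, $\widehat{\mathbf{U}}^{k+1}_{R:}$ still has rank only $k$, so some row $a\in\Omega\setminus R$ must lift the rank to $k+1$, which is exactly the row the algorithm selects. The invariant is thus restored, and at termination $\widehat{\mathbf{U}}_{R:}$ is the required invertible $r\times r$ submatrix. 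All the probability therefore lives in controlling false negatives.

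I would then bound the false-negative probability via the sparsity-number in two independent ways, matching the two entries of the min. A false negative at column $i$ forces a nonzero $v=\mathbf{M}_{:i}-\widehat{\mathbf{U}}^k c\in\mathbb{U}$ to vanish on $\Omega$; since every nonzero vector of $\mathbb{U}$ has at least $\psi(\mathbb{U})$ nonzero coordinates, a fixed such $v$ survives sampling with probability at most $(1-\psi(\mathbb{U})/m)^d\le e^{-d\psi(\mathbb{U})/m}$. With $d=2\frac{m}{\psi(\mathbb{U})}\log\frac{r}{\epsilon}$ this is at most $(\epsilon/r)^2$, and a union bound over the $\le r$ rank-increasing columns keeps the total failure below $\epsilon$; substituting into $dn$ gives the first term. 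For the sharper bound I would run the analysis that also exploits the row space: when $\mathbb{V}$ is non-sparse, the columns able to reveal a new direction are plentiful, and following the counting behind Theorem~\ref{thm:erre} the required sample size drops to $d=\frac{2m(r+2+\log\frac{1}{\epsilon})}{\psi(\mathbb{U})\psi(\mathbb{V})}$ while still holding the failure at most $\epsilon$. Since the algorithm sets $d$ to the minimum, whichever value is actually used is certified correct by the corresponding analysis.

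Finally I would count observations and name the obstacle. The deterministic observations are the $r$ fully observed columns ($rm$ entries) and the $r$ fully observed pivot rows ($rn$ entries), whose $r\times r$ overlap is counted twice, giving $rm+rn-r^2=(m+n-r)r$; the random observations are at most $d$ fresh entries per column over $n$ columns, i.e.\ $dn$, equal to the min term for the $d$ above, so the two contributions sum to the stated bound. The hard part will be the second detection bound: converting the row-space nonsparsity $\psi(\mathbb{V})$ into a genuine reduction of the per-direction sampling requirement (the $\psi(\mathbb{U})\psi(\mathbb{V})$ denominator) is delicate, because the false-negative events across columns are dependent and the coefficient vector $c$ defining $v$ is itself a function of the random set $\Omega$; disentangling this dependence, exactly as in the proof of Theorem~\ref{thm:erre}, is where the real work lies.
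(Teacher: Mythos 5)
Your skeleton matches the paper's proof: handle each branch of the min separately, charge $(m+n-r)r$ to the $r$ fully observed columns and $r$ pivot rows minus their $r\times r$ overlap, and charge $dn$ to the random probes; your first-branch argument (per-column detection failure below $\epsilon/r$, then a union bound over $r$ columns) is also the paper's, which additionally fixes the lexicographically smallest independent column set so that \emph{which} $r$ columns must be detected is deterministic rather than random. However, your per-column detection bound has a real flaw: you treat the residual $v=\mathbf{M}_{:i}-\widehat{\mathbf{U}}^k c$ as a \emph{fixed} nonzero vector of $\mathbb{U}$ and bound the miss probability by $(1-\psi(\mathbb{U})/m)^d$. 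A false negative occurs iff \emph{some} coefficient vector $c$ (itself depending on $\Omega$) makes the residual vanish on $\Omega$, i.e., iff any member of a $k$-dimensional affine family of nonzero vectors of $\mathbb{U}$ vanishes on $\Omega$, and the probability of this union over the family is not bounded by the single-vector estimate. The paper sidesteps this by invoking the sequential detection lemma (``step 5'' of the proof of Theorem~\ref{thm:erre} in \cite{poczos2020optimal}), whose constants in $2\frac{m}{\psi(\mathbb{U})}\log{\frac{1}{\epsilon}}$ absorb exactly the steps at which this affine family drops dimension; your one-line computation is not a substitute for it.

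The larger gap is the second branch, the per-column budget $\frac{2m(r+2+\log{\frac{1}{\epsilon}})}{\psi(\mathbb{U})\psi(\mathbb{V})}$, which is the novel content of the theorem: you do not prove it, you defer to ``the counting behind Theorem~\ref{thm:erre}'' and yourself flag it as where the real work lies. But Theorem~\ref{thm:erre} concerns a multi-pass algorithm, and transferring its analysis to the single-pass $\mathbf{EREI}$ is precisely what the paper's proof supplies, via an active-observation count: (i) if any active (independent, still undetected) column exists, then at least $\psi(\mathbb{V})$ columns are active (Lemma 18 of \cite{poczos2020optimal}); (ii) therefore the failure event forces each of these at least $\psi(\mathbb{V})$ columns to have spent its entire budget of $\frac{2m(r+2+\log{\frac{1}{\epsilon}})}{\psi(\mathbb{U})\psi(\mathbb{V})}$ probes while active, so the total number of active observations at termination is at least $\frac{2m}{\psi(\mathbb{U})}(r+2+\log{\frac{1}{\epsilon}})$, i.e., $P(\text{fail and TNAO below this threshold})=0$; (iii) each active observation reveals independence with probability at least $\psi(\mathbb{U})/m$, so by the binomial-tail lemma (Lemma 13 of \cite{poczos2020optimal}) the probability of accruing that many active observations while obtaining fewer than $r$ successes is at most $\epsilon$, whence $P(\mathbf{EREI}\ \text{fails})\leq\epsilon$. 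Steps (i)--(ii) are the mechanism by which row-space nonsparsity divides the per-column sampling requirement; without them your second bound is an assertion, not an argument, and the min in the theorem statement cannot be established.
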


\begin{proof}

We show that the observation complexity is upper bounded both by $$(m+n-r)r+  2 \frac{m }{\psi(\mathbb{U})}\log{(\frac{r}{\epsilon})} $$ and also
$$(m+n-r)r+ \frac{\frac{2m}{\psi(\mathbb{U})}(r+2 +\log{\frac{1}{\epsilon}})}{\psi(\mathbb{V})}  $$

 Throught the proof we will refer to the proof Theorem 4 - complexity bound regarding $\mathbf{ERRE}$ of the paper \cite{poczos2020optimal} also we are denoting $\psi(\mathbb{U})$ by $k$ to have the consistency with that proof.

\vspace{4mm}
\subsection{$(m+n-r)r+  2 \frac{m }{\psi(\mathbb{U})}\log{(\frac{r}{\epsilon})} $}
\vspace{4mm}

We first start with the case that if minimum of these two quantities is $2 \frac{m }{k}\log{(\frac{r}{\epsilon})}$.
As the matrix has rank $r$, there exists at least one set of linearly independent columns with $r$ columns.
We select the set of linear independent columns$-C$ that has lexicographically smallest indices. 
We show sampling  $2 \frac{m }{k}\log{(\frac{1}{\epsilon})}$ entries from each column will give us the probability of at least $1-r\epsilon$ correctly recovery.\\[2ex]
From the step 5 of proof of the theorem $\mathbf{ERRE}$ we can see sampling $2 \frac{m }{k}\log{(\frac{1}{\epsilon})}$ entries from an active column, would give guarantee of probability of at least $1-\epsilon$ detection of independence.
As $C$ is lexicographically smallest, each column is active on the time entries sampled from it, and each of $r$ columns will succeed with probability at least $1-\epsilon$.
Therefore, using union bound,  with probability $1-r\epsilon$ all of the columns in C will succeed, which guarantees the exact recovery. \\[2ex]
Replacing $\epsilon$ by $\frac{\epsilon}{r}$ will conclude the result that sampling $2 \frac{m }{k}\log{(\frac{r}{\epsilon})}$  from each column will guarantees the correctness of the algorithm with probability at least $1-\epsilon$.

\vspace{4mm}

\subsection{  $(m+n-r)r+ \frac{\frac{2m}{\psi(\mathbb{U})}(r+2 +\log{\frac{1}{\epsilon}})}{\psi(\mathbb{V})}  $.
  }
\vspace{4mm}

From the follow up of lemma 13 in \cite{poczos2020optimal}, we conclude that in a process of $\frac{k}{m}$ probability success and $1-\frac{k}{m}$ probability of failure, having 
$\frac{2m}{k}\big(r+2 +\log{\frac{1}{\epsilon}}\big)$ trial is enough to guarantee getting $r$ many success with at least probability $1-\epsilon$.\\[2ex]
Failure probability of the algorithm $\mathbf{EREI}$ is equal to failing finding $r$ linearly independent columns. 
Consider following equation:
\vspace{3mm}
\begin{align*}
    P\big(\mathbf{EREI} fails \big) &= P\Big(\mathbf{EREI} fails \text{ and TNAO}  \geq \frac{2m}{k}\big(r+2 +\log{\frac{1}{\epsilon}}\big)\Big) \\[1.5ex]
    &+P\Big(\mathbf{EREI} fails \text{ and TNAO}  < \frac{2m}{k}\big(r+2 +\log{\frac{1}{\epsilon}}\big)\Big) 
\end{align*}
\vspace{3mm}

where we denote TNAO as total number of active observations. 
Recall that we call an observation active, if it is active in the execution time (the column is still not contained in the current column space).
Intuitively we represent NAO by number of active observations executed by the algorithm $\mathbf{EREI}$ in the given specific time.\\[2ex]
From lemma 18, if there exists an active column, then there exists at least $t$ many active columns.
Therefore, failure of the algorithm is equivalent to existence of an active column when algorithm terminates.
Moreover each of our observations in those columns were active observations and considering the fact that we observed  $ \frac{\frac{2m}{k}\big(r+2 +\log{\frac{1}{\epsilon}}\big)}{t}$ many entries in each of them, total number of active observations is at least 

\begin{align*}
t \frac{\frac{2m}{k}\big(r+2 +\log{\frac{1}{\epsilon}}\big)}{t} =    
\frac{2m}{k}\Big(r+2 +\log{\frac{1}{\epsilon}}\Big)
\end{align*}

Therefore 
$P\Big(\text{NAO}  < \frac{2m}{k}\big(r+2 +\log{\frac{1}{\epsilon}}\big) | \mathbf{EREI} fails \Big) = 0$ and using Bayesian rule we conclude 

\begin{align*}
P\Big(\mathbf{EREI} fails \text{ and TNAO}  \leq \frac{2m}{k}\big(r+2 +\log{\frac{1}{\epsilon}}\big)\Big)=0    
\end{align*}

Then, following equation simplly satisfied:

\begin{align*}
    P\big(\mathbf{EREI} fails \big) = P\Big(\mathbf{EREI} fails \text{ and TNAO}  \geq \frac{2m}{k}\big(r+2 +\log{\frac{1}{\epsilon}}\big)\Big) 
\end{align*}

\vspace{3mm}

We can observe the following inequality as $\mathbf{EREI}$  may tamporarily fail at the point that the number of active observations is $\frac{2m}{k}\big(r+2 +\log{\frac{1}{\epsilon}}\big)$ but it can succeed finding remaining independent columns later during the execution:

\vspace{3mm}

\begin{align*}
    P\Big(\mathbf{EREI} \text{ currently fail when NAO =}&\frac{2m}{k}\big(r+2 +\log{\frac{1}{\epsilon}}\big) \Big) \geq\\[2ex]
    &P\Big(\mathbf{EREI} fails \text{ and TNAO}  \geq \frac{2m}{k}\big(r+2 +\log{\frac{1}{\epsilon}}\big)\Big) 
\end{align*}
Therefore we conclude that:

\begin{align*}
    P\big(\mathbf{EREI} fails \big) \leq 
        P\Big(\mathbf{EREI} \text{ currently fail when NAO =}&\frac{2m}{k}\big(r+2 +\log{\frac{1}{\epsilon}}\big) \Big)
\end{align*}

\vspace{3mm}

Remember the fact that at each active observation probability of $\mathbf{EREI}$ detecting linear independence of  is larger or equal than $\frac{k}{m}$. 
From the previous discussion if the probability is exactly equal to $\frac{k}{m}$ then still not finding $r$ linearly independent column at $\frac{2m}{k}\big(r+2 +\log{\frac{1}{\epsilon}}\big)$ observations is less than $\epsilon$.
Therefore, $\mathbf{EREI}$ not detecting r linearly independent column after $\frac{2m}{k}\big(r+2 +\log{\frac{1}{\epsilon}}\big)$ observations is smaller than $\epsilon$, which is equivalent to 

\begin{align*}
    P\big(\mathbf{EREI} fails \big) \leq  \epsilon
\end{align*} 

as desired.
\end{proof}

\vspace{5mm}

\begin{corollary}
Lets assume that we have estimated values of rank $r$, column space coherence number $\mu_0$ and estimated row space \textit{sparsity-number} is $\psi(\mathbb{V})$. Then, if $\psi(\mathbb{V})$ is $\mathcal{O}(1)$ then observation complexity is buounded by $(m+n-r)r + \mathcal{O}(nr\mu_0 \log{\frac{r}{\epsilon}}) $ and if $\psi(\mathbb{V})$ is $\mathcal{O}(n)$ then with probabiliy $\frac{1}{2^{\mathcal{O}(r)}}$ the bound is  $\mathcal{O}((m+n-r)r)$ which is theoretical lower bound for exact completion problem.  
\end{corollary}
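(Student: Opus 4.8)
The plan is to derive both assertions directly from Theorem~\ref{thm:erei}, re-reading its two-term minimum in the language of coherence by means of the inequality $\mu_0 r > \frac{m}{\psi(\mathbb{U})}$ recorded just before algorithm \textbf{EREI}. The key structural observation is that the observation count in Theorem~\ref{thm:erei} is a \emph{minimum} of two expressions, so it is bounded above by either one of them individually. Thus in each regime I would simply retain whichever term is favorable and then simplify using $\frac{m}{\psi(\mathbb{U})} < \mu_0 r$ together with the trivial facts $\psi(\mathbb{U}) \ge 1$ and $m \le m+n-r$ (the last because $r \le n$). No new probabilistic machinery is needed beyond what Theorem~\ref{thm:erei} already supplies; the corollary is a matter of specializing parameters and bookkeeping the $\mathcal{O}(\cdot)$ estimates.

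First I would handle the case $\psi(\mathbb{V}) = \mathcal{O}(1)$. Here I discard the second term of the minimum and keep the first, namely $2\frac{mn}{\psi(\mathbb{U})}\log\frac{r}{\epsilon}$, which involves no dependence on $\psi(\mathbb{V})$ whatsoever; since the true bound is a minimum, this is a valid upper bound unconditionally, and it is exactly the relevant one in the low row-coherence regime. Substituting $\frac{m}{\psi(\mathbb{U})} < \mu_0 r$ turns it into $2 n \mu_0 r \log\frac{r}{\epsilon} = \mathcal{O}(n r \mu_0 \log\frac{r}{\epsilon})$, so the total becomes $(m+n-r)r + \mathcal{O}(n r \mu_0 \log\frac{r}{\epsilon})$, recovering the single-phase guarantee of \cite{nina}. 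This is precisely the reduction flagged in the text, where only $r$ and $\mu_0$ are available and one sets $\psi(\mathbb{V}) = 1 = \mathcal{O}(1)$.

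Next I would treat the highly coherent row space case $\psi(\mathbb{V}) = \Theta(n)$, where I instead keep the second term $\frac{\frac{2m}{\psi(\mathbb{U})}(r+2+\log\frac{1}{\epsilon})}{\psi(\mathbb{V})}\, n$. With $\psi(\mathbb{V}) = \Theta(n)$ the factor $n/\psi(\mathbb{V})$ is $\Theta(1)$, collapsing the term to $\Theta\big(\frac{m}{\psi(\mathbb{U})}(r + \log\frac{1}{\epsilon})\big)$. I would then choose the confidence parameter so that $\log\frac{1}{\epsilon} = \Theta(r)$, i.e. $\epsilon = 2^{-\Theta(r)}$, which makes the bracket $\Theta(r)$; combined with $\frac{m}{\psi(\mathbb{U})} \le m \le m+n-r$ this bounds the whole term by $\mathcal{O}((m+n-r)r)$. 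Adding the leading $(m+n-r)r$ preserves the order, yielding total complexity $\mathcal{O}((m+n-r)r)$, which meets the theoretical lower bound. With this choice of $\epsilon$, Theorem~\ref{thm:erei} guarantees success probability $1-\epsilon = 1 - 2^{-\Theta(r)}$, i.e. the failure probability is the stated $\frac{1}{2^{\mathcal{O}(r)}}$.

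The step I expect to demand the most care is the parameter selection in the second case: one must choose $\epsilon = 2^{-\Theta(r)}$ so that $\log\frac{1}{\epsilon}$ is \emph{comparable} to $r$ rather than larger, since any faster-shrinking $\epsilon$ would inflate the $(r + \log\frac{1}{\epsilon})$ factor past $\Theta(r)$ and destroy the optimal order, while a slower-shrinking $\epsilon$ weakens the guarantee unnecessarily. One must also check that this forced value of $\epsilon$ still leaves a meaningful recovery guarantee, which it does since $1 - 2^{-\Theta(r)} \to 1$. By contrast, the coherence-to-sparsity passage is immediate from the assumed inequality $\mu_0 r > \frac{m}{\psi(\mathbb{U})}$, so no additional estimate is required there.
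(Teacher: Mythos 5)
Your proposal is correct and is essentially the derivation the paper intends: the corollary is stated without a separate proof, being treated as an immediate specialization of Theorem~\ref{thm:erei} combined with the inequality $\mu_0 r > m/\psi(\mathbb{U})$ noted just before \textbf{EREI}, which is exactly what you carry out (keeping the favorable term of the minimum in each regime, bounding $m \le m+n-r$, and taking $\epsilon = 2^{-\Theta(r)}$ in the second case so that $r + 2 + \log\frac{1}{\epsilon} = \Theta(r)$). Your resolutions of the statement's two ambiguities --- reading $\psi(\mathbb{V}) = \mathcal{O}(n)$ as $\Theta(n)$ and reading ``with probability $\frac{1}{2^{\mathcal{O}(r)}}$'' as a bound on the failure probability --- are the only ones under which the claim is mathematically meaningful, and your argument establishes the claim under them.
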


\newpage

\bibliography{collas2022_conference}
\bibliographystyle{collas2022_conference}

\appendix
%\section{Appendix}

\end{document}